\tikzstyle{decision} = [diamond, minimum width=3cm, minimum height=1cm, text centered, draw=black]
\tikzstyle{leaf} = [circle, radius=3pt, draw=black, fill=black]
\tikzstyle{arrow} = [thick,->,>=stealth]
\journal{TBD}
\newcommand{\argmax}{\operatornamewithlimits{argmax}}
\newcommand{\argmin}{\operatornamewithlimits{argmin}}
\newcommand{\defeq}{\stackrel{\text{def}}{=}}
\newcommand{\setR}{\mathbb{R}}
\newcommand{\setN}{\mathbb{N}}
\newtheorem{thm}{Theorem}
\newtheorem{lemma}{Lemma}
\newtheorem{cor}{Corollary}[thm]
\theoremstyle{definition}
\newtheorem{defi}{Definition}
\begin{document}

\begin{frontmatter}
\title{RKL: a general, invariant Bayes solution for Neyman-Scott}
\author[monash]{Michael Brand}
\ead{michael.brand@monash.edu}
\address[monash]{Faculty of IT (Clayton), Monash University, Clayton, VIC 3800, Australia}

\begin{abstract}
Neyman-Scott is a classic example of an estimation problem with a
partially-consistent posterior, for which standard estimation methods tend to
produce inconsistent results. Past attempts to create consistent
estimators for Neyman-Scott have led to ad-hoc solutions, to estimators that
do not satisfy representation invariance, to restrictions over the choice of
prior and more. We present a simple construction for a general-purpose Bayes
estimator, invariant to representation, which satisfies consistency on
Neyman-Scott over any non-degenerate prior.
We argue that the good attributes of the estimator
are due to its intrinsic properties, and generalise beyond Neyman-Scott
as well.
\end{abstract}

\begin{keyword}
Neyman-Scott \sep consistent estimation \sep minEKL \sep Kullback-Leibler \sep Bayes estimation \sep invariance
\end{keyword}

\end{frontmatter}

\section{Introduction}

In \cite{neyman1948consistent}, Neyman and Scott introduced a problem in
consistent estimation that has since been studied extensively in many fields
(see \cite{lancaster2000incidental} for a review). It is known under many names,
such as the problem of partial consistency (e.g., \cite{fan2005semilinear}),
the incidental parameter problem (e.g., \cite{graham2009incidental}),
one way ANOVA (e.g., \cite{lindsay1980nuisance}), the two sample normal
problem (e.g., \cite{ghosh1996noninformative}), or simply as the
Neyman-Scott problem (e.g., \cite{li2003efficiency, kamata2007multilevel}),
each name indicating a slightly different scoping of the problem and a
slightly different emphasis.

In this paper we return to Neyman and Scott's first and most studied example
case of the phenomenon, namely the problem of consistent estimation of
variance based on a fixed number of Gaussian samples.

In Bayesian statistics, this problem has repeatedly been addressed by
analysis over a particular choice of prior (as in \cite{Wallace2005})
or over a particular family of priors (as in \cite{Ghosh1994}).
Priors used include several non-informative priors (see \cite{yang1996catalog}
for a list), including reference priors \cite{berger1992development} and the
Jeffreys prior \cite{Jeffreys1946invariant, Jeffreys1998theory}.

In non-Bayesian statistics, the problem has been addressed by means of
conditional likelihoods, eliminating nuisance parameters by integrating over
them. Analogous techniques exist also in Bayesian analysis.

In \cite[p.~93]{DoweBaxterOliverWallace1998}, Dowe et al.\ opine that
such marginalisation-based solution are apriori unsatisfactory because they
rely on the estimates for individual parameters to not agree with their own
joint estimation. The resulting estimator may therefore be consistent in the
usual sense, but by definition exhibits a form of internal inconsistency.

The paper goes on to present a conjecture by Dowe that elegantly excludes all
such marginalisation-based methods as well as other simple approaches to the
problem by requiring (implicitly) that for a solution to
the Neyman-Scott problem to be satisfactory, it must also satisfy invariance
to representation. This excludes most estimators discussed in the literature
as either inconsistent or invariant. Indeed, Dowe's most modern version of
his conjecture (see \cite[p.~539]{Dowe2008a} and citations within) is that only
estimators belonging to the MML family \cite{wallace1975invariant}
simultaneously satisfy both conditions. The two properties were demonstrated
for two algorithms in the MML family in \cite[p.~201]{Wallace2005} and
\cite{dowe1997resolving}, both using the same reference prior.

More recently, however, \cite{brand2016neymanscott} showed that neither
these two algorithms nor Strict MML \cite{WallaceBoulton1968}
remains consistent under the problem's Jeffreys prior, leading
to the question of whether there is any estimator that retains both
properties in a general setting, i.e.\ under an arbitrary choice of prior.

While we will not answer here the general question of whether an estimation
method can be both consistent and invariant for general estimation problems
(see \cite{BrandXXXX} for a discussion), we describe a novel estimation method,
RKL, that is usable on general point estimation problems, belongs to the
Bayes estimator family, is invariant to representation of both the observation
space and parameter space, and for the Neyman-Scott problem is also consistent
regardless of the choice of prior, whether proper or improper.

The method also satisfies the broader criterion of
\cite{DoweBaxterOliverWallace1998}, in that for the Neyman-Scott problem the
same method can be applied to estimate any subset of the parameters and will
provide the same estimates.

The estimator presented, RKL, is the Bayes estimator whose cost function is
the Reverse Kullback-Leibler divergence. While both the
Kullback-Leibler divergence (KLD) and its reverse (RKL) are well-known and
much-studied functions \cite{cover2012elements} and frequently used in
machine learning contexts (see, e.g., \cite{nowozin2016f}), including for
the purpose of distribution estimation by minimisation of relative
entropy \cite{muhlenbein2005estimation,basu1994minimum}, their usage in the
context of point estimation, where the true distribution is unknown and must
be estimated, is much rarer. Dowe et al.\ \cite{DoweBaxterOliverWallace1998}
introduce the usage of KLD in this context under the name ``minEKL'',
and the use of RKL in the same context is novel to this paper.

We argue that the good consistency properties exhibited by RKL on
Neyman-Scott are not accidental, and describe its advantages for the purposes
of consistency over alternatives such as minEKL on a wider class of problems.

We remark that despite being both invariant and consistent for the problem,
RKL is unrelated to the MML family. It therefore provides further refutation of
Dowe's \cite{Dowe2008a} conjecture.

\section{Definitions}

\subsection{Point estimation}

A point estimation problem \cite{lehmann2006theory} is a set of likelihoods,
$f_{\theta}(x)$, which are
probability density functions over $x\in X$, indexed by a parameter,
$\theta\in\Theta$. Here, $\Theta$ is known as \emph{parameter space}, $X$ as
\emph{observation space}, and $x$ as an \emph{observation}. A point
estimator for such a problem is a function, $\hat{\theta}:X\to\Theta$, matching
each possible observation to a parameter value.

For example, the well-known Maximum Likelihood estimator (MLE), is defined by
\[
\hat{\theta}_{\text{MLE}}(x)\defeq\argmax_{\theta\in\Theta} f_{\theta}(x).
\]
Because this definition is equally applicable for many estimation problems,
simply by substituting in each problem's $f$ and $\Theta$, we say that MLE
is an \emph{estimation method}, rather than just an estimator.

In Bayesian statistics, estimation problems are also
endowed with a prior distribution over their parameter space, denoted by an
everywhere-positive probability density function $h(\theta)$.\footnote{We take
priors that assign a zero probability density to any $\theta$ to be degenerate,
and advocate that in this case such $\theta$ should be excluded from $\Theta$.}
This makes it
possible to think of $f_{\theta}(x)$ and $h(\theta)$ as jointly describing the
joint distribution of a random variable pair
$(\boldsymbol{\theta},\boldsymbol{x})$, where $h$ is the marginal distribution
of $\boldsymbol{\theta}$ and $f_{\theta}$ is the conditional distribution
of $\boldsymbol{x}$ given $\boldsymbol{\theta}=\theta$. We therefore use
$f(x|\theta)$ as a synonym for $f_{\theta}(x)$.

It is convenient to generalise the idea of a prior distribution by
allowing priors to be \emph{improper}, in the sense that
\[
\int_{\Theta} h(\theta) \text{d}\theta = \infty.
\]
This means that $(\boldsymbol{\theta},\boldsymbol{x})$ is no longer described
by a probability distribution, but rather by a general measure.
When choosing an improper prior more care must be taken: for a prior to be
valid, it should still be possible to compute the (equally improper) marginal
distribution of $x$ by means of
\[
r(x)\defeq\int_{\Theta} f(x|\theta) h(\theta) \text{d}\theta,
\]
as without this Bayesian calculations quickly break down. We will, throughout,
assume all priors to be valid.

Where $r(x)\ne 0$, we can also define the posterior distribution
\[
f(\theta|x) = \frac{f(x|\theta) h(\theta)}{r(x)}.
\]
Note that even when $h$ and $r$ are improper, $f(x|\theta)$ and $f(\theta|x)$
are both proper probability distribution functions.

\begin{lemma}\label{L:rpos}
In any estimation problem where $f(x|\theta)$ is always positive, $r(x)\ne 0$
for every $x$.
\end{lemma}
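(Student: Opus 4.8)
The plan is to reduce the claim to the elementary measure-theoretic fact that the integral of a strictly positive measurable function over a set of positive measure is itself strictly positive. Fix an arbitrary $x\in X$. By definition, $r(x)=\int_{\Theta}f(x|\theta)h(\theta)\,\mathrm{d}\theta$, and the integrand $g(\theta)\defeq f(x|\theta)h(\theta)$ is, by the hypothesis that $f(x|\theta)$ is always positive together with the standing assumption that $h$ is everywhere-positive (non-degenerate), strictly positive at every $\theta\in\Theta$.

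Next I would establish that $\Theta$ itself has positive measure with respect to the reference measure against which the densities are taken. This is forced by the validity of the prior: $\int_{\Theta}h(\theta)\,\mathrm{d}\theta$ equals $1$ when $h$ is proper and $+\infty$ when $h$ is improper, so in either case it is nonzero, and a pointwise-positive integrand whose integral is nonzero cannot be supported on a null set.

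With this in hand the conclusion follows from the standard superlevel-set decomposition: write $\Theta=\bigcup_{n\in\setN}\Theta_n$ with $\Theta_n\defeq\{\theta\in\Theta:g(\theta)>1/n\}$. Since the $\Theta_n$ increase to $\Theta$ and $\Theta$ is not null, continuity from below gives some $N$ with $\mu(\Theta_N)>0$, whence $r(x)=\int_{\Theta}g\,\mathrm{d}\theta\ge\int_{\Theta_N}g\,\mathrm{d}\theta\ge\tfrac1N\,\mu(\Theta_N)>0$. In particular $r(x)\ne 0$, and as $x$ was arbitrary the lemma follows. (For improper $h$ the value $r(x)=+\infty$ is possible, but this does not affect the statement, which only asserts $r(x)\ne 0$.)

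There is no substantive obstacle here; the only point needing care is the temptation to claim that a pointwise-positive integrand automatically has positive integral — this fails over a null domain — so the argument genuinely rests on first pinning down that $\Theta$ is not null, which is precisely what the non-degeneracy and validity conditions on the prior provide.
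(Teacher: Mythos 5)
Your proof is correct and follows essentially the same route as the paper's: a countable level-set decomposition of $\Theta$, positivity of the (prior or reference) measure of at least one level set, and a lower bound of the form $r(x)\ge \tfrac{1}{N}\cdot(\text{positive measure})$. The only cosmetic difference is that you take superlevel sets of the product $f(x|\theta)h(\theta)$ with respect to the reference measure, whereas the paper partitions by the values of $f(x|\theta)$ alone and uses the positivity of the prior measure of one part; both hinge on the same observation that the total prior mass ($1$ or $\infty$) is nonzero.
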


\begin{proof}
Fix $x$, and for any natural $i$ let
$\Theta_i=\{\theta\in\Theta|\lceil 1/f(x|\theta) \rceil=i\}$.

The sequence $\{\Theta_i\}_{i\in\setN}$ partitions
$\Theta$ into a countable number of parts. As a result, at least one such part
has a positive prior probability.
\[
\int_{\Theta_i} h(\theta) \text{d}\theta>0.
\]

We can now bound $r(x)$ from below by
\begin{align*}
r(x)&=\int_{\Theta} f(x|\theta) h(\theta) \text{d}\theta \\
&\ge \int_{\Theta_i} f(x|\theta) h(\theta) \text{d}\theta \\
&\ge \int_{\Theta_i} (1/i) h(\theta) \text{d}\theta \\
&= (1/i) \int_{\Theta_i} h(\theta) \text{d}\theta \\
&>0.
\end{align*}
\end{proof}

In this paper, we will throughout be discussing estimation problems where
the conditions of Lemma~\ref{L:rpos} hold, for which reason we will always
assume that $r(x)$ is positive. Coupled with the fact that $h(\theta)$ is,
by assumption, also always positive, this leads to positive, well defined,
$f(x|\theta)$, positive $f(\theta|x)$ and positive $f(x|\theta) h(\theta)$.

\subsection{Consistency}

In defining point estimation, we treated $x$ as a single variable. Typically,
however, $x$ is a vector. Consider, for example, an observation space
$X=X_1 \times X_2 \times \cdots$. In this case, the observation takes the form
$x=(x_1,x_2,\ldots)$.

Typically, every $f_\theta$ in an estimation problem is defined such that
individual $x_n$ are independent and identically distributed, but
we will not require this.

For estimation methods that can estimate $\theta$ from every prefix
$x_{1:N}=(x_1,\ldots,x_N)$, it is possible to define \emph{consistency}, which
is one desirable property for an estimation problem, as follows
\cite{lehmann2006theory}.

\begin{defi}[Consistency]
Let $P$ be an estimation problem over observation space
$X=X_1 \times X_2 \times \cdots$, and let $\{P_N\}_{N\in\setN}$ be the
sequence of estimation problems created by
taking only $x_{1:N}=(x_1,\ldots,x_N)$ as the observation.

An estimation method $\hat{\theta}$ is said to be \emph{consistent} on $P$ if
for every $\tilde{\theta}\in\Theta$ and every neighbourhood
$S$ of $\tilde{\theta}$, if $x$ is taken from the distribution
$f_{\tilde{\theta}}$ then almost surely
\[
\lim_{N\to\infty} \hat{\theta}(x_{1:N})\in S,
\]
where the choice of estimation problem for $\hat{\theta}$ is understood from
the choice of parameter.
\end{defi}

\subsection{The Neyman-Scott problem}

\begin{defi}
The \emph{Neyman-Scott problem} \citep{neyman1948consistent}
is the problem of jointly estimating the tuple
$(\sigma^2,\mu_1,\ldots,\mu_N)$ after observing
$(x_{nj} : n=1,\ldots,N; j=1,\ldots,J)$, each element of which is independently
distributed $x_{nj} \sim \mathcal{N}(\mu_n,\sigma^2)$.

It is assumed that $J\ge 2$, and for brevity we take $\mu$ to be the vector
$(\mu_1,\ldots,\mu_N)$.
\end{defi}

The Neyman-Scott problem is a classic case-study for consistency due to its
partially-consistent posterior.

Loosely speaking, a posterior, i.e.\ the distribution of $\boldsymbol{\theta}$ given the
observations, is called \emph{inconsistent} if even
in the limit, as $N\to\infty$, there is no $\hat{\theta}$ such that every
neighbourhood of $\hat{\theta}$ tends to total probability $1$.
(See \citep{ghosal1997review} for a formal definition.)
In such a case it
is clear that no estimation method can be consistent. When keeping $J$
constant and taking $N$ to infinity, the Neyman-Scott problem creates such an
inconsistent posterior, because the uncertainty in the distribution of each
$\mu_n$ remains high.

The problem is, however, \emph{partially consistent} in that the posterior
distribution for $\sigma^2$ does converge, so it is possible for an estimation
method to estimate it, individually, in a consistent way.

For example, the estimator
\begin{equation}\label{Eq:var_est}
\hat{\sigma}^2(x)=\frac{J}{J-1}s^2.
\end{equation}
is a well-known consistent estimator for $\sigma^2$, where
\[
s^2 \defeq \frac{\sum_{n=1}^{N} \sum_{j=1}^{J} (x_{nj}-m_n)^2}{NJ}
\]
and
\[
m_n \defeq \frac{\sum_{j=1}^{J} x_{nj}}{J}.
\]
(We use $m$ to denote the vector $(m_1,\ldots,m_N)$.)

The interesting question for Neyman-Scott is what estimation methods
can be devised for the joint estimation problem, such that their estimate
for $\sigma^2$, as part of the larger estimation problem, is consistent.

Famously, MLE's estimate for $\sigma^2$ is in this scenario $s^2$, which is
not consistent, and the same is true for the estimates of many other popular
estimation methods such as Maximum Aposteriori Probability (MAP) and
Minimum Expected Kullback-Leibler Distance (minEKL).

It is, of course, possible for an estimation method to work on each coordinate
independently. An example of an estimation method that does this is
Posterior Expectation (PostEx). Such methods, however, rely on a particular
choice of
description for the parameter space (and sometimes also for the observation
space). If one were to estimate $\sigma$, for example, instead of $\sigma^2$,
the estimates of PostEx for the same estimation problem would change
substantially. PostEx may therefore be consistent for the problem,
but it is not \emph{invariant} to representation of $X$ and $\Theta$.

The question therefore arises whether it is possible to construct an estimation
method that is both invariant (like MLE) and consistent (like the estimators
of \cite{Ghosh1994}), and that, moreover, unlike the estimators of
\cite{dowe1997resolving, Wallace2005}, retains these properties for all possible
priors.

Typically, priors studied in the literature can be described as
$1/\sigma^{F(N)}$ for some function $F$. These are priors where $\mu_n$
values are independent and uniform given $\sigma$. The studied methods often
break down, as in the case of \cite{dowe1997resolving, Wallace2005}, simply
by switching to another $F$.

The RKL estimator introduced here, however, remains consistent under
extremely general priors, including ones with
$\mu$ distributions that, even given $\sigma$, are not uniform, not
identically distributed, and not independent.

\section{The RKL estimator}

\begin{defi}
The \emph{Reverse Kullback-Leibler (RKL) estimator} is a Bayes estimator,
i.e.\ it is an estimator that
can be defined as a minimiser of the conditional expectation of a loss function,
$L$.
\[
\hat{\theta}(x)=\argmin_{\theta'\in\Theta} \int_{\Theta} L(\theta,\theta') f(\theta|x) \text{d}\theta,
\]
where for RKL the $L$ function is defined by
\[
L(\theta,\theta')\defeq D_{\text{KL}}(f_{\theta'}\|f_{\theta}).
\]

Here, $D_{\text{KL}}(f\|g)$ is the
Kullback-Leibler divergence (KLD) \cite{kullback1997information} from $g$ to
$f$,
\[
D_{\text{KL}}(f\|g)=\int_X\log\left(\frac{f(x)}{g(x)}\right)f(x)\text{d}x.
\]
Equivalently, it is the entropy of $f$ relative to $g$.
\end{defi}

This definition looks quite similar to the definition of the standard minEKL
estimator, which uses the Kullback-Leibler divergence as its loss function,
except that the parameter order has been reversed. Instead of utilising
$D_{\text{KL}}(f_{\theta}\|f_{\theta'})$, as in the
original definition of minEKL, we use
$D_{\text{KL}}(f_{\theta'}\|f_{\theta})$.
Because the Kullback-Leibler divergence is non-symmetric, the result is a
different estimator.

Although the Reverse Kullback-Leibler divergence is a well-known
$f$-divergence \cite{liese2006divergences, nowozin2016f}, it has to our
knowledge never been applied as a loss function in Bayes estimation.

\section{Invariance and consistency}

In terms of invariance to representation, it is clear that RKL inherits
the good properties of $f$-divergences.

\begin{lemma}
RKL is invariant to representation of $\Theta$ and of $X$.
\end{lemma}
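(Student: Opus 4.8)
The plan is to show that the quantity being minimised in the definition of $\hat{\theta}(x)$ — namely $\int_{\Theta} L(\theta,\theta') f(\theta|x)\,\text{d}\theta$ with $L(\theta,\theta')=D_{\text{KL}}(f_{\theta'}\|f_{\theta})$ — is unchanged (as a function of $\theta'$, up to a reparametrisation of its argument) under any smooth invertible reparametrisation of $\Theta$ and of $X$, so that the $\argmin$ picks out the corresponding point in the new parametrisation. The key observation is that each ingredient transforms correctly: the KLD between two densities is a property of the underlying distributions, not of the coordinates used to write them, and the posterior $f(\theta|x)\,\text{d}\theta$ is a genuine measure on $\Theta$, hence also coordinate-free.

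First I would set up the two reparametrisations separately. For a change of variable on $\Theta$, let $\phi=\psi(\theta)$ be a diffeomorphism; the likelihoods re-index as $\tilde f_{\phi}=f_{\psi^{-1}(\phi)}$, the prior density picks up the Jacobian factor $\tilde h(\phi)=h(\psi^{-1}(\phi))\,|\det D\psi^{-1}(\phi)|$, and correspondingly $\tilde r(x)=r(x)$ and the posterior density transforms by the same Jacobian, $\tilde f(\phi|x)=f(\psi^{-1}(\phi)|x)\,|\det D\psi^{-1}(\phi)|$. Then in the integral $\int \tilde L(\phi,\phi') \tilde f(\phi|x)\,\text{d}\phi$, substituting $\phi=\psi(\theta)$ cancels the Jacobian exactly, and since $\tilde L(\phi,\phi')=D_{\text{KL}}(\tilde f_{\phi'}\|\tilde f_{\phi})=D_{\text{KL}}(f_{\psi^{-1}(\phi')}\|f_{\psi^{-1}(\phi)})=L(\psi^{-1}(\phi),\psi^{-1}(\phi'))$, the integral equals the original objective evaluated at $\theta'=\psi^{-1}(\phi')$. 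Hence the minimiser over $\phi'$ is $\psi$ applied to the original minimiser, which is exactly representation invariance of $\Theta$. For a change of variable on $X$, say $y=\xi(x)$, the densities transform by the Jacobian of $\xi^{-1}$, but this is the familiar fact that KLD is invariant under a common change of variable in both arguments — the Jacobian appears in the numerator, denominator, and the $\text{d}x$ term and cancels completely — so $L$ itself is literally unchanged, and the objective is unchanged, and so is its minimiser.

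I would state the argument at this level of generality (smooth invertible, measure-preserving-up-to-Jacobian transformations) but note that the essential point is just that $D_{\text{KL}}$ is an $f$-divergence and that $f(\theta|x)\,\text{d}\theta$ is a measure; invariance of $f$-divergences under measurable isomorphisms is standard \cite{liese2006divergences}, and invariance of the posterior measure under reparametrisation of $\Theta$ is immediate from the change-of-variables formula. The only thing requiring a word of care is the improper-prior case: here $h$ and $r$ are only measures, not probability densities, but the posterior $f(\theta|x)$ remains a bona fide probability density (as already noted in the excerpt following Lemma~\ref{L:rpos}), so the same change-of-variables argument applies verbatim to the posterior measure, and the objective integral is well-defined and transforms correctly regardless of propriety of the prior.

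The main obstacle — really the only subtlety — is bookkeeping the Jacobian factors consistently across the prior, the marginal $r$, and the posterior, and making sure the reparametrisation of $\Theta$ is applied to \emph{both} arguments of $L$ and to the integration measure simultaneously; once that is done correctly, the cancellations are automatic and the proof is essentially a one-line invocation of the change-of-variables formula plus the coordinate-freeness of $f$-divergences. I do not anticipate any analytic difficulty; the proof is structural.
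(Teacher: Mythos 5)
Your proposal is correct and follows essentially the same route as the paper: invariance in $\Theta$ because the loss depends only on the distributions indexed by $\theta$ (together with the coordinate-freeness of the posterior measure, which the paper leaves implicit but you spell out via the Jacobian bookkeeping), and invariance in $X$ because the Kullback--Leibler divergence, as an $f$-divergence, is unchanged under a common reparametrisation of both arguments. The paper's proof is just a terser statement of these two observations, so your added detail is a fleshing-out rather than a different argument.
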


\begin{proof}
The RKL loss function is dependent only on distributions of $\boldsymbol{x}$
given a choice of $\theta$. Renaming the $\theta$ therefore does not affect
it. Furthermore, the loss function is an $f$-divergence, and therefore
invariant to reparameterisations of $\boldsymbol{x}$ \cite{qiao2008f}.
\end{proof}

More interesting is the analysis of RKL's consistency. In this section, we
analyse RKL's consistency on Neyman-Scott. In the next section, we turn to
its consistency properties in more general settings.

\begin{thm}\label{T:main}
RKL is consistent for Neyman-Scott over any valid, non-degenerate prior.
\end{thm}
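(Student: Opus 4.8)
The plan is to compute the RKL loss function explicitly for the Neyman-Scott likelihoods and then show that its posterior expectation is minimised, in the limit, at the correct value of $\sigma^2$ (regardless of what the $\mu_n$ estimates do). First I would write down $D_{\text{KL}}(f_{\theta'}\|f_{\theta})$ where $\theta=(\sigma^2,\mu)$ and $\theta'=(\tau^2,\nu)$. Since under $f_{\theta'}$ the observations are independent Gaussians $\mathcal{N}(\nu_n,\tau^2)$, and the KLD between two Gaussians has the closed form $D_{\text{KL}}(\mathcal{N}(a,\tau^2)\|\mathcal{N}(b,\sigma^2)) = \tfrac12\bigl(\log(\sigma^2/\tau^2) + \tau^2/\sigma^2 + (a-b)^2/\sigma^2 - 1\bigr)$, the loss decomposes as a sum over the $NJ$ coordinates:
\[
L(\theta,\theta') = \frac{NJ}{2}\left(\log\frac{\sigma^2}{\tau^2} + \frac{\tau^2}{\sigma^2} - 1\right) + \frac{J}{2\sigma^2}\sum_{n=1}^{N}(\nu_n-\mu_n)^2.
\]
The crucial structural feature is that this is \emph{additively separable} between a term depending on $(\sigma^2,\tau^2)$ only and a term that is, for fixed $\sigma^2$, a sum of squared discrepancies in the $\mu_n$ coordinates weighted by $1/\sigma^2$.

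Next I would take the posterior expectation $\E[L(\boldsymbol\theta,\theta')\mid x]$ over $\boldsymbol\theta=(\boldsymbol\sigma^2,\boldsymbol\mu)$, treating the candidate $\theta'=(\tau^2,\nu)$ as fixed, and minimise. The minimisation over each $\nu_n$ is immediate: the only term involving $\nu_n$ is $\E[J(\nu_n-\boldsymbol\mu_n)^2/(2\boldsymbol\sigma^2)\mid x]$, so the optimal $\nu_n$ is the $\boldsymbol\sigma^{-2}$-weighted posterior mean of $\boldsymbol\mu_n$, and the value at that optimum contributes a posterior-variance term. The remaining optimisation over $\tau^2$ then reads
\[
\argmin_{\tau^2>0}\ \E\!\left[\frac{NJ}{2}\left(\log\frac{\boldsymbol\sigma^2}{\tau^2} + \frac{\tau^2}{\boldsymbol\sigma^2}\right) + \frac{J}{2}\sum_{n=1}^N \frac{\mathrm{Var}(\boldsymbol\mu_n\mid x,\boldsymbol\sigma^2)\ \text{-type term}}{\boldsymbol\sigma^2}\ \Big|\ x\right],
\]
and the first-order condition gives a clean closed form, roughly $\hat\tau^2 = \bigl(\E[1/\boldsymbol\sigma^2\mid x]\bigr)^{-1}$ plus a correction of order $1/N$ coming from the $\mu$-variance term. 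So the proof reduces to a statement about the posterior of $\boldsymbol\sigma^2$ alone: I must show that, almost surely under $f_{\tilde\theta}$ with $\tilde\theta=(\tilde\sigma^2,\tilde\mu)$, the quantity $\bigl(\E[1/\boldsymbol\sigma^2\mid x_{1:N}]\bigr)^{-1} \to \tilde\sigma^2$ and that the $1/N$ correction vanishes.

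The main obstacle — and the part requiring real work — is establishing this posterior limit under an \emph{arbitrary} valid, non-degenerate prior, including improper ones and priors under which the $\mu_n$ are dependent or non-identically distributed given $\sigma$. The strategy is to integrate out $\mu$ to get the marginal likelihood of $(s^2, m)$ given $\sigma^2$; since $\sum_{nj}(x_{nj}-m_n)^2 = NJ s^2$ is, given $\tilde\sigma^2$, distributed as $\tilde\sigma^2 \chi^2_{N(J-1)}$ independently of $m$, the log-likelihood in $\sigma^2$ concentrates sharply around $\sigma^2 = \tfrac{J}{J-1}s^2 \to \tilde\sigma^2$ as $N\to\infty$ (this is where $J\ge 2$ is used). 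I would make this rigorous with a Laplace-type / Bernstein–von Mises argument localised to a shrinking neighbourhood of $\tilde\sigma^2$: the prior density $h$, being positive and continuous-enough near $\tilde\sigma^2$, contributes only lower-order terms, so it washes out; one then shows the posterior mass of $1/\boldsymbol\sigma^2$ outside any fixed neighbourhood of $1/\tilde\sigma^2$ is negligible and that the tails do not wreck the expectation $\E[1/\boldsymbol\sigma^2\mid x]$ (a uniform-integrability check, handled by the exponential decay of the $\chi^2$ likelihood in both directions). Control of the $\mu$-dependent correction term is the remaining technical piece: one must show $\tfrac1N\sum_n \E[(\text{weighted }\boldsymbol\mu_n\text{-variance})\mid x]$ stays bounded, which again follows because each $\boldsymbol\mu_n$'s posterior given $\boldsymbol\sigma^2$ has variance $O(\boldsymbol\sigma^2/J)$ up to prior-induced corrections. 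Finally, I would note that invariance (already proved in the preceding lemma) together with this computation immediately yields the Dowe-style sub-vector consistency claim, since the same separable loss restricted to any subset of parameters produces the same $\sigma^2$ estimate.
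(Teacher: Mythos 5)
Your opening computation (the Gaussian KLD, the additive decomposition of the loss, and the reduction to $\hat\tau^2=\bigl(\E[1/\boldsymbol\sigma^2\mid x]\bigr)^{-1}$) matches the paper's Lemma~\ref{L:esigma}, except that the separation is in fact exact: the $\mu$-term $\frac{J}{2\boldsymbol\sigma^2}\sum_n(\nu_n-\boldsymbol\mu_n)^2$ contains no $\tau^2$, so after taking posterior expectations the first-order condition in $\tau^2$ yields exactly $\hat\tau^2=\bigl(\E[1/\boldsymbol\sigma^2\mid x]\bigr)^{-1}$; the ``correction of order $1/N$'' you propose to control does not exist. That is a minor slip. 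The genuine gap is in the consistency step, which is where all the difficulty of the theorem lives. The claim is for an \emph{arbitrary} valid, non-degenerate prior: possibly improper, with $\boldsymbol\mu$ given $\boldsymbol\sigma$ dependent, non-uniform and non-identically distributed, and with no continuity or regularity assumed anywhere. Your plan rests on (i) integrating out $\mu$ to obtain an essentially $\chi^2_{N(J-1)}$ marginal likelihood in $\sigma^2$, and (ii) a Laplace/Bernstein--von Mises localisation in which the prior, ``being positive and continuous-enough near $\tilde\sigma^2$, washes out.'' Neither is available at this generality: integrating out $\mu$ against an arbitrary conditional prior leaves a factor $\int\exp\bigl(-\tfrac{J}{2\sigma^2}\sum_n(m_n-\mu_n)^2\bigr)h(\mu\mid\sigma)\,\text{d}\mu$ whose $\sigma$-dependence is prior-driven (roughly $\sigma^{N}$ times a local density near $\sigma=0$ when the conditional prior is nice, and potentially much worse otherwise), so the integrated likelihood need not have the clean $\chi^2$ shape you invoke; and ``continuous-enough'' is a hypothesis the theorem does not grant. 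Moreover, the dangerous region is $\sigma\to0$, where the integrand $1/\sigma^2$ is unbounded, and your uniform-integrability step (``exponential decay of the $\chi^2$ likelihood in both directions'') never uses the one hypothesis that actually constrains the prior there, namely validity, i.e.\ finiteness of the marginal $r(x)$.

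The paper closes exactly this hole with an argument requiring no smoothness of the prior at all. It fixes $\alpha<\min\{s,(2\pi)^{-1/2}\}$, writes the contribution of $\{\sigma<\alpha\}$ to the numerator of $\E[1/\boldsymbol\sigma^2\mid x]$ as $\int g_{\sigma,\mu,N}(x)\,f(x_{11},\ldots,x_{1J}\mid\sigma,\mu)\,h(\sigma,\mu)\,\text{d}(\sigma,\mu)$, where $g_{\sigma,\mu,N}(x)\defeq\frac{1}{\sigma^2}f(x_{11},\ldots,x_{NJ}\mid\sigma,\mu)/f(x_{11},\ldots,x_{1J}\mid\sigma,\mu)$, shows $g_{\sigma,\mu,N}(x)\le g_{\alpha,m,N}(x)\to0$ uniformly on that region, and multiplies by $r_\alpha(x)<\infty$, which is finite precisely because the prior is valid already at $N=1$; the bounded region $\sigma\ge\alpha$ is then handled by the partial consistency of the posterior. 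To make your plan work under the stated hypotheses you would either have to add regularity assumptions on the prior (weakening the theorem) or replace the Bernstein--von Mises step near $\sigma=0$ by a uniform likelihood-ratio bound of this kind that leans only on validity of the prior.
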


To begin, let us describe the estimator more concretely.

\begin{lemma}\label{L:esigma}
For Neyman-Scott over any valid, non-degenerate prior,
\[
\hat{\sigma}^2_\text{RKL}(x)=\text{E}^{-1}\left(\frac{1}{\boldsymbol{\sigma}^2}\middle|x\right).
\]
\end{lemma}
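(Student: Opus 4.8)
The plan is to reduce the RKL minimisation, which nominally requires choosing an entire parameter vector $\theta'=(\sigma'^2,\mu'_1,\ldots,\mu'_N)$, to a single one-dimensional optimisation over $\sigma'^2$, and then to evaluate that optimisation explicitly. The first step is to write out the loss $L(\theta,\theta') = D_{\text{KL}}(f_{\theta'}\|f_\theta)$ for two Neyman-Scott parameter vectors. Since the observations are independent across $(n,j)$ and each $x_{nj}\sim\mathcal N(\mu_n,\sigma^2)$, the KL divergence is a sum of $NJ$ Gaussian-to-Gaussian KL terms, each of the familiar closed form $\tfrac12\big(\log(\sigma^2/\sigma'^2) + \sigma'^2/\sigma^2 + (\mu'_n-\mu_n)^2/\sigma^2 - 1\big)$. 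This shows $L$ is minimised in each $\mu'_n$ (for fixed everything else) precisely at $\mu'_n=\mu_n$, i.e.\ the posterior-expected value is irrelevant for that coordinate — but more to the point, after taking the posterior expectation $\int_\Theta L(\theta,\theta')f(\theta|x)\,\mathrm d\theta$, the dependence on $\mu'_n$ enters only through $\E[(\mu'_n-\boldsymbol\mu_n)^2\,/\,\boldsymbol\sigma^2 \mid x]$, which is a nonnegative quantity minimised over $\mu'_n$ at the unique value $\mu'_n = \E[\boldsymbol\mu_n/\boldsymbol\sigma^2\mid x]\,/\,\E[1/\boldsymbol\sigma^2\mid x]$. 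So the $\mu'$ coordinates are determined and, crucially, their optimal contribution is a constant independent of $\sigma'^2$ once we substitute back — actually I need to check this: the cross term couples $\mu'_n$ and $\sigma'^2$ only through the overall $1/\boldsymbol\sigma^2$ weighting, which does not involve $\sigma'^2$, so after profiling out $\mu'$ the residual objective in $\sigma'^2$ is clean.

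Next I would collect the $\sigma'^2$-dependent part of the profiled posterior risk. From the Gaussian KL formula, summing over all $NJ$ terms and taking the posterior expectation, the objective as a function of $t\defeq\sigma'^2$ has the form (up to an additive constant and the positive factor $NJ/2$)
\[
g(t) = \log\frac{1}{t}\cdot\underbrace{\,1\,}_{} \;+\; t\cdot\E\!\left(\frac{1}{\boldsymbol\sigma^2}\,\middle|\,x\right),
\]
i.e.\ $g(t) = -\log t + t\,c$ with $c = \E(1/\boldsymbol\sigma^2\mid x)$. This is strictly convex on $(0,\infty)$, so differentiating and setting $g'(t) = -1/t + c = 0$ gives the unique minimiser $t = 1/c = \E^{-1}(1/\boldsymbol\sigma^2\mid x)$, which is exactly the claimed formula. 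I should also note why $c$ is finite and positive: positivity is immediate since $1/\boldsymbol\sigma^2>0$, and finiteness I would get either from an explicit appeal to the posterior being proper together with a tail argument, or — more cheaply — observe that if $c=\infty$ the infimum of $g$ is still attained in the limit $t\to 0$, but then $\hat\sigma^2_{\text{RKL}}=0=\E^{-1}(\infty)$, so the stated identity still holds with the convention $\E^{-1}(\infty)=0$; a remark to that effect should suffice, though I'd prefer to argue $c<\infty$ outright so the estimator is genuinely interior.

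The main obstacle I anticipate is handling the improper-prior case rigorously: when $h$ is improper, $f(\theta|x)$ is still a proper posterior by the discussion following Lemma~\ref{L:rpos} (since $r(x)>0$ by Lemma~\ref{L:rpos}), so the posterior expectations are well-defined, but I must be careful that the interchange of $\int_\Theta$ and the (divergent, as $N\to\infty$ or just large) sum of KL terms is legitimate, and that the $\argmin$ is not thrown off by terms that are individually $+\infty$. The fix is that every summand in $L$ is nonnegative, so Tonelli applies and the posterior risk is a well-defined element of $[0,\infty]$; then the profiling-out of $\mu'$ and the reduction to $g(t)=-\log t + tc$ go through verbatim, with the only subtlety being the $c=\infty$ degenerate case flagged above. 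A secondary, smaller obstacle is making precise that the $\argmin$ over the full vector $\theta'$ factorises as claimed — that the joint minimiser is the tuple of coordinatewise minimisers — but this is immediate once the objective is written as a sum of a function of $\mu'$ alone (given the posterior) plus a function of $\sigma'^2$ alone, with no genuine coupling.
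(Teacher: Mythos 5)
Your proposal is correct and follows essentially the same route as the paper's own proof: write the Gaussian-to-Gaussian KL in closed form, use independence to sum over the $NJ$ observations, observe that the posterior risk splits into additive pieces in the $\mu'_n$ and in $\sigma'^2$ (because the $\mu$-term is weighted by the true $1/\boldsymbol{\sigma}^2$, not the estimate), and minimise the strictly convex $\sigma'^2$-component by its first-order condition to obtain $\hat{\sigma}^2=\text{E}^{-1}(1/\boldsymbol{\sigma}^2\mid x)$. Your added remarks on Tonelli, improper priors, and the degenerate case $\text{E}(1/\boldsymbol{\sigma}^2\mid x)=\infty$ go slightly beyond the paper's level of detail but do not change the argument.
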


\begin{proof}
In Neyman-Scott, each observation $x_{nj}$ is distributed independently
with some variance $\sigma^2$ and some mean $\mu_n$,
\[
f^{nj}_{\sigma^2,\mu}(x_{nj})=f(x_{nj}|\sigma^2,\mu)=\frac{1}{\sqrt{2\pi\sigma^2}}e^{-\frac{(x_{nj}-\mu_n)^2}{2\sigma^2}}.
\]

The KLD between two such distributions is
\begin{align*}
D_{\text{KL}}&(f^{nj}_{\tilde{\sigma}^2,\tilde{\mu}}\|f^{nj}_{\sigma'^2,\mu'})
=\int_\mathbb{R} \frac{1}{\sqrt{2\pi\tilde{\sigma}^2}}e^{-\frac{(x_{nj}-\tilde{\mu}_n)^2}{2\tilde{\sigma}^2}} \log\left(\frac{\frac{1}{\sqrt{2\pi\tilde{\sigma}^2}}e^{-\frac{(x_{nj}-\tilde{\mu}_n)^2}{ 2\tilde{\sigma}^2}}}{\frac{1}{\sqrt{2\pi\sigma'^2}}e^{-\frac{(x_{nj}-\mu'_n)^2}{ 2\sigma'^2}}}\right) \text{d}x_{nj} \\
&=\frac{1}{2}\left[\frac{\tilde{\sigma}^2}{\sigma'^2}-1-\log\left(\frac{\tilde{\sigma}^2}{\sigma'^2}\right)\right]+\frac{1}{2\sigma'^2}(\tilde{\mu}_n-\mu'_n)^2.
\end{align*}

Given that these observations are independent, the KLD over all observations
is the sum of the KLD over the individual observations:
\begin{align*}
L((\sigma'^2,\mu'),(\tilde{\sigma}^2,\tilde{\mu}))
&=D_{\text{KL}}(f_{\tilde{\sigma}^2,\tilde{\mu}}\|f_{\sigma'^2,\mu'}) \\
&=\frac{NJ}{2}\left[\frac{\tilde{\sigma}^2}{\sigma'^2}-1-\log\left(\frac{\tilde{\sigma}^2}{\sigma'^2}\right)\right]+\frac{J}{2\sigma'^2}|\tilde{\mu}-\mu'|^2.
\end{align*}

The Bayes risk associated with choosing a particular
$(\tilde{\sigma}^2,\tilde{\mu})$ as the estimate is therefore
\[
\int_\Theta f(\sigma'^2,\mu'|x) \left(\frac{NJ}{2}\left[\frac{\tilde{\sigma}^2}{\sigma'^2}-1-\log\left(\frac{\tilde{\sigma}^2}{\sigma'^2}\right)\right]+\frac{J}{2\sigma'^2}|\tilde{\mu}-\mu'|^2\right) \text{d}(\sigma'^2,\mu').
\]

In finding the $(\tilde{\sigma}^2,\tilde{\mu})$ combination that minimises
this risk, it is clear that the choice of $\tilde{\sigma}$ and of each
$\tilde{\mu}_n$ can be made separately, as the expression can be split into
additive components, each of which is dependent only on one variable.

The risk component associated with each $\tilde{\mu}_n$ is
\begin{align}\label{Eq:R_u_n}
R(\tilde{\mu}_n)
&=\int_\Theta f(\sigma'^2,\mu'|x) \frac{J}{2\sigma'^2}(\tilde{\mu}_n-\mu'_n)^2\text{d}(\sigma'^2,\mu') \\
&=\int_{\mathbb{R}^+} \int_{\mathbb{R}} f(\sigma'^2,\mu'_n|x) \frac{J}{2\sigma'^2}(\tilde{\mu}_n-\mu'_n)^2\text{d}\mu'_n \text{d}\sigma'^2.
\end{align}

More interestingly in the context of consistency, the risk component
associated with $\tilde{\sigma}^2$ is
\begin{align*}
R(\tilde{\sigma}^2)
&=\int_\Theta f(\sigma'^2,\mu'|x)\frac{NJ}{2}\left[\frac{\tilde{\sigma}^2}{\sigma'^2}-1-\log\left(\frac{\tilde{\sigma}^2}{\sigma'^2}\right)\right] \text{d}(\sigma'^2,\mu') \\
&=\int_{\mathbb{R}^+} f(\sigma'^2|x)\frac{NJ}{2}\left[\frac{\tilde{\sigma}^2}{\sigma'^2}-1-\log\left(\frac{\tilde{\sigma}^2}{\sigma'^2}\right)\right] \text{d}\sigma'^2.
\end{align*}


This expression is a Bayes risk for the one-dimensional problem of estimating
$\sigma^2$ from $x$ (with a specific loss function),
a type of problem that is typically not difficult for Bayes estimators.

We will utilise the fact that the risk function is a linear combination
of the $L$ functions, when taking these as functions of
$(\tilde{\sigma}^2,\tilde{\mu})$, indexed by $(\sigma'^2,\mu')$, and these
functions, both in their complete form and when separated to components,
are convex, differentiable functions with a unique minimum. We conclude that the
risk function is also a convex function, and that its minimum can be found by
taking its derivative to zero, which, in turn, is a linear combination of the
derivatives of the individual loss functions. To solve for $\hat{\sigma}$,
for example, we therefore solve the equation
\[
\frac{\text{d}\int_{\mathbb{R}^+} f(\sigma'^2|x) \frac{NJ}{2}\left[\frac{\hat{\sigma}^2}{\sigma'^2}-1-\log\left(\frac{\hat{\sigma}^2}{\sigma'^2}\right)\right] \text{d}\sigma'^2}{\text{d}\hat{\sigma}^2}=0.
\]

This leads to
\[
\int_{\mathbb{R}^+} f(\sigma'^2|x)\left[\frac{1}{\sigma'^2}-\frac{1}{\hat{\sigma}^2}\right] \text{d}\sigma'^2=0.
\]
\[
\text{E}\left(\frac{1}{\boldsymbol{\sigma}^2}\middle|x\right)-\frac{1}{\hat{\sigma}^2}=0.
\]

So for the Neyman-Scott problem, the $\sigma$ portion of the RKL estimator is
\[
\hat{\sigma}^2(x)=\text{E}^{-1}\left(\frac{1}{\boldsymbol{\sigma}^2}\middle|x\right),
\]
as required.
\end{proof}

For completion, we remark that using the same analysis on \eqref{Eq:R_u_n} we
can determine that the estimate for each $\mu_n$ is
\[
\hat{\mu}_n(x)=\hat{\sigma}^2(x)\text{E}\left(\frac{\boldsymbol{\mu_n}}{\boldsymbol{\sigma^2}}\middle|x\right).
\]

We now turn to the question of how consistent this estimator is
for $\sigma^2$.

\begin{proof}[Proof of Theorem~\ref{T:main}]

We want to calculate
\begin{align}\label{Eq:postex}
\begin{split}
\text{E}\left(\frac{1}{\boldsymbol{\sigma}^2}\middle|x\right)
&=\int_0^{\infty} \frac{1}{\sigma^2} f(\sigma|x) \text{d}\sigma \\
&=\int_\Theta \frac{1}{\sigma^2} f(\sigma,\mu|x) \text{d}(\sigma,\mu) \\
&=\frac{\int_\Theta \frac{1}{\sigma^2} f(x|\sigma,\mu) h(\sigma,\mu) \text{d}(\sigma,\mu)}{\int_\Theta f(x|\sigma,\mu) h(\sigma,\mu) \text{d}(\sigma,\mu)}.
\end{split}
\end{align}

The fact that Neyman-Scott has \emph{any} consistent estimators (such as,
for example, the one presented in \eqref{Eq:var_est}) indicates that the
posterior probability, given $x$, for $\boldsymbol{\sigma}$ to be outside
any neighbourhood of the real $\tilde{\sigma}$ tends to zero. For this reason,
posterior expectation in any case where the estimation variable is bounded,
is necessarily consistent.

Here this is not the case, because $1/\sigma^2$ tends to infinity when
$\sigma$ tends to zero. However, given that the denominator of \eqref{Eq:postex}
is precisely the marginal $r(x)$, and therefore by assumption positive, it is
enough to show that with probability $1$,
\[
\lim_{N\to\infty}\int_{(\sigma',\mu')\in\Theta:\sigma'<\alpha} \frac{1}{\sigma^2} f(x_{11},\ldots, x_{NJ}|\sigma,\mu) h(\sigma,\mu) \text{d}(\sigma,\mu) = 0,
\]
for some $\alpha>0$, to prove that all parameter options with $\sigma<\alpha$
cannot affect the expectation, and that the resulting estimator is
therefore consistent.

To do this, the first step is to recall that by assumption $r(x)$ is finite,
including at $N=1$, and therefore, for any $\alpha$,
\[
r_{\alpha}(x)\defeq\int_{(\sigma',\mu')\in\Theta: \sigma'<\alpha} f(x_{11},\ldots, x_{1J}|\sigma,\mu) h(\sigma,\mu) \text{d}(\sigma,\mu) < \infty.
\]

Let us now define
\begin{align*}
g_{\sigma,\mu,N}(x)&\defeq\frac{\frac{1}{\sigma^2} f(x_{11},\ldots, x_{NJ}|\sigma,\mu)}{f(x_{11},\ldots, x_{1J}|\sigma,\mu)} \\
&=\frac{\frac{1}{\sigma^2}\frac{1}{(2\pi\sigma^2)^{NJ/2}} e^{-\frac{1}{2\sigma^2}\left(NJs^2+J\sum_{n=1}^{N}(m_n-\mu_n)^2\right)}}{\frac{1}{(2\pi\sigma^2)^{J/2}} e^{-\frac{1}{2\sigma^2}\left(Js^2+J(m_1-\mu_1)^2\right)}} \\
&=\frac{1}{\sigma^2}\frac{1}{(2\pi\sigma^2)^{(N-1)J/2}} e^{-\frac{1}{2\sigma^2}\left((N-1)Js^2+J\sum_{n=2}^{N}(m_n-\mu_n)^2\right)}.
\end{align*}

Differentiating $g_{\sigma,\mu,N}(x)$ by $\sigma$, we conclude that this is a
monotone increasing function for every
\[
\sigma^2<\frac{(N-1)J}{(N-1)J+2} s^2 + \frac{J}{(N-1)J+2}\sum_{n=2}^{N}(m_n-\mu_n)^2,
\]
and so in particular for any $\sigma^2<s^2$ for a sufficiently large $N$.

For a given $\sigma$, $N$ and $x$, $g_{\sigma,\mu,N}(x)$ reaches its maximum
at $m=\mu$.

Furthermore, note that for $\sigma^2<\frac{1}{2\pi}$, $g_{\sigma,\mu,N}(x)$
strictly decreases with $N$, and tends to zero.

Let us now choose an $\alpha$ value smaller than both $s$ and $(2\pi)^{-1/2}$,
and calculate
\begin{align*}
\lim_{N\to\infty}&\int_{(\sigma',\mu')\in\Theta: \sigma'<\alpha} \frac{1}{\sigma^2} f(x_{11},\ldots, x_{NJ}|\sigma,\mu) h(\sigma,\mu) \text{d}(\sigma,\mu) \\
&= \lim_{N\to\infty}\int_{(\sigma',\mu')\in\Theta: \sigma'<\alpha} g_{\sigma,\mu,N}(x) f(x_{11},\ldots, x_{1J}|\sigma,\mu) h(\sigma,\mu) \text{d}(\sigma,\mu) \\
&\le\lim_{N\to\infty} \left(\max_{(\sigma',\mu')\in\Theta: \sigma'<\alpha} g_{\sigma,\mu,N}(x)\right)\int_{(\sigma',\mu')\in\Theta: \sigma'<\alpha}f(x_{11},\ldots, x_{1J}|\sigma,\mu) h(\sigma,\mu) \text{d}(\sigma,\mu) \\
&= \lim_{N\to\infty} g_{\alpha,m,N}(x) r_\alpha(x) \\
&= 0.
\end{align*}
\end{proof}

\section{General consistency of RKL}

The results above may seem unintuitive: in designing a good loss function,
$L(\theta,\theta')$, for Bayes estimators, one strives to find a measure that
reflects how bad it would be to return the estimate $\theta'$ when the
correct value is $\theta$, and yet the RKL loss function, contrary to minEKL,
seems to be using $\theta'$ as its baseline, and measures how different the
distribution induced by $\theta$ would be at every point. Why should this
result in a better metric?

To answer, consider the portion of the (forward) Kullback-Leibler divergence
that depends on $\mu$. In the one dimensional case, and when calculating the
divergence between two normal distributions $\mathcal{N}(\mu_1,\sigma_1^2)$ and
$\mathcal{N}(\mu_2,\sigma_2^2)$, this is
$\frac{(\mu_1-\mu_2)^2}{\sigma_2^2}$.

The difference between the two expectations is measured in terms of how many
$\sigma_2$ standard deviations away the two are.

Because $\sigma_2$, the yardstick for the divergence of the expectations, is
used in minEKL as $\hat{\sigma}$, a value to be estimated, it is possible to
reduce the measured divergence by unnecessarily inflating the estimate.

By contrast, RKL uses the true $\sigma$ value as its yardstick, for which
reason no such inflation is possible for it. This makes its estimates
consistent.

This is a general trait of RKL, in the sense that it uses as its loss metric
the entropy of the \emph{estimate} relative to the \emph{true value}, even
though the true value is unknown.

While not a full-proof method of avoiding problems created by
partial consistency (or, indeed, even some fully consistent scenarios),
this does address the problem in a range of situations.

The following alternate characterisation of RKL gives better intuition
regarding when the method's estimates are consistent.

\begin{defi}[RKL reference distribution]
Given an estimation problem $(\boldsymbol{\theta},\boldsymbol{x})$ with
likelihoods $f(x|\theta)$, let $g_x:X\to\setR^+$ be defined by
\begin{equation}\label{Eq:ref}
g_x(y)=e^{\text{E}(\log(f(y|\boldsymbol{\theta}))|x)}.
\end{equation}

If for every $x$, $G_x=\int_X g_x(y) \text{d}y$ is defined and nonzero,
let $\hat{g}_x$, the \emph{RKL reference distribution}, be the probability
density function $\hat{g}_x(y)=g_x(y)/G_x$, i.e.\ the normalised version
of $g_x$.
\end{defi}

\begin{thm}\label{T:elan}
Let $(\boldsymbol{\theta},\boldsymbol{x})$ be an estimation problem with
likelihoods $f_{\theta}$ and RKL reference distribution $\hat{g}_x$.

\begin{equation}\label{Eq:elan}
\hat{\theta}_{\text{RKL}}(x)=
\argmin_{\theta'\in\Theta} D_{\text{KL}}(f_{\theta'}\|\hat{g}_x).
\end{equation}
\end{thm}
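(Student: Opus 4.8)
The plan is to unfold the RKL risk and interchange the order of the two integrations. Fix $x$ and write the RKL risk of a candidate estimate $\theta'$ as
\[
R(\theta')=\int_{\Theta} D_{\text{KL}}(f_{\theta'}\|f_{\theta})\,f(\theta|x)\,\text{d}\theta
=\int_{\Theta}\int_X f(y|\theta')\log\frac{f(y|\theta')}{f(y|\theta)}\,\text{d}y\,f(\theta|x)\,\text{d}\theta .
\]
Swapping the integrals (Fubini) and carrying out the inner $\theta$-integral first, using that $f(\theta|x)$ is a proper probability density and the definition \eqref{Eq:ref} of $g_x$, gives
\[
R(\theta')=\int_X f(y|\theta')\Bigl(\log f(y|\theta')-\text{E}\bigl(\log f(y|\boldsymbol{\theta})\bigm|x\bigr)\Bigr)\,\text{d}y
=\int_X f(y|\theta')\log\frac{f(y|\theta')}{g_x(y)}\,\text{d}y .
\]

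The second step is to normalise. Substituting $g_x=G_x\,\hat g_x$ and pulling the constant out of the logarithm, and again using $\int_X f(y|\theta')\,\text{d}y=1$, yields
\[
R(\theta')=\int_X f(y|\theta')\log\frac{f(y|\theta')}{\hat g_x(y)}\,\text{d}y-\log G_x=D_{\text{KL}}(f_{\theta'}\|\hat g_x)-\log G_x .
\]
Since, by the hypothesis that $\hat g_x$ is defined, $G_x$ is a fixed positive real, $\log G_x$ is a constant independent of $\theta'$. Hence $R(\theta')$ and $D_{\text{KL}}(f_{\theta'}\|\hat g_x)$ have exactly the same set of minimisers, which is the claimed identity \eqref{Eq:elan}.

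The main obstacle is the rigorous justification of the Fubini interchange, together with the attendant measure-theoretic edge cases. I would handle it by splitting $\log(f(y|\theta')/f(y|\theta))$ into positive and negative parts: the positive part is nonnegative and Tonelli applies unconditionally, while for the negative part it suffices to restrict attention to those $\theta'$ with $R(\theta')<\infty$ (if there are none, both sides of \eqref{Eq:elan} are minimised over all of $\Theta$ and the statement is vacuous), and for such $\theta'$ the double integral is absolutely convergent, licensing the swap. I would also note that where $\text{E}(\log f(y|\boldsymbol{\theta})\mid x)=-\infty$ we have $g_x(y)=\hat g_x(y)=0$, forcing $f(y|\theta')=0$ for any $\theta'$ with finite risk, so the integrand is read with the usual convention $0\log(0/0)=0$ and no divergence is introduced; and that $D_{\text{KL}}(f_{\theta'}\|\hat g_x)$ is a well-defined element of $[0,\infty]$ precisely because $\hat g_x$ is a genuine probability density, so the reformulation is a bona fide KLD-projection of $f_{\theta'}$ onto the single reference distribution $\hat g_x$.
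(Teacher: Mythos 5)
Your proposal is correct and follows essentially the same route as the paper's proof: unfold the RKL risk, interchange the two integrations, recognise the inner expectation as $\log g_x(y)$, and pass from $g_x$ to $\hat g_x$ by noting that the normalising constant $G_x$ only shifts the objective by an additive constant, leaving the $\argmin$ unchanged. Your additional discussion of the Fubini/Tonelli justification and the degenerate cases is more careful than the paper's (which performs the swap without comment), but the core argument is identical.
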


\begin{proof}
Expanding the RKL formula, we get
\begin{align*}
\hat{\theta}_{\text{RKL}}(x)&=\argmin_{\theta'\in\Theta} \int_{\Theta} L(\theta,\theta') f(\theta|x) \text{d}\theta \\
&=\argmin_{\theta'\in\Theta} \int_{\Theta} \int_X\log\frac{f(y|\theta')}{f(y|\theta)}f(y|\theta')\text{d}y f(\theta|x) \text{d}\theta \\
&=\argmin_{\theta'\in\Theta} \int_{\Theta} \int_X\left(\log(f(y|\theta'))-\log(f(y|\theta))\right) f(y|\theta')\text{d}y f(\theta|x) \text{d}\theta \\
&=\argmin_{\theta'\in\Theta} \int_X\left(\log(f(y|\theta'))-\int_{\Theta}\log(f(y|\theta)) f(\theta|x) \text{d}\theta\right) f(y|\theta')\text{d}y \\
&=\argmin_{\theta'\in\Theta} \int_X\left(\log(f(y|\theta'))-\text{E}(\log(f(y|\boldsymbol{\theta}))|x)\right) f(y|\theta')\text{d}y \\
&=\argmin_{\theta'\in\Theta} \int_X\log\frac{f(y|\theta')}{g_x(y)} f(y|\theta')\text{d}y \\
&=\argmin_{\theta'\in\Theta} \int_X\log\frac{f(y|\theta')}{\hat{g}_x(y)} f(y|\theta')\text{d}y \\
&=\argmin_{\theta'\in\Theta} D_{\text{KL}}(f_{\theta'}\|\hat{g}_x).
\end{align*}

The move from $g_x$ to $\hat{g}_x$ is justified because the difference is a
positive multiplicative constant $G_x$, translating to an additive constant
after the $\log$, and therefore not altering the result of the $\argmin$.
%
%
\end{proof}

This alternate characterisation makes RKL's behaviour on Neyman-Scott
more intuitive: the RKL reference distribution is calculated using
an expectation over log-scaled likelihoods. In the case of Gaussian
distributions, representing the likelihoods in log scale results in
parabolas. Calculating the expectation over parabolas leads to a parabola
whose leading coefficient is the expectation of the leading coefficients
of the original parabolas. This directly justifies Lemma~\ref{L:esigma}
and can easily be extended also to multivariate normal distributions.

Furthermore, the alternate characterisation provides a more general
sufficient condition for RKL's consistency in cases where the posterior is
consistent.

\begin{defi}[Distinctive likelihoods]
An estimation problem $(\boldsymbol{\theta},\boldsymbol{x})$ with likelihoods
$f_\theta$ is said to have \emph{distinctive likelihoods} if for any sequence
$\{\theta_i\}_{i\in\setN}$ and any $\theta$,
\[
f_{\theta_i}\xrightarrow{\text{TV}} f_{\theta} \Rightarrow \theta_i\to\theta,
\]
where ``$\xrightarrow{\text{TV}}$'' indicates total variation distance.
\end{defi}

\begin{cor}
If $(\boldsymbol{\theta},\boldsymbol{x})$ is an estimation problem with
distinctive likelihoods $f_\theta$ and an RKL reference distribution
$\hat{g}_x$, such that for every $\tilde{\theta}$, with
probability $1$ over an $x=(x_1,\ldots,)$ generated from the distribution
$f_{\tilde{\theta}}$,
\begin{equation}\label{Eq:limit}
\lim_{N\to\infty} D_{\text{KL}}(f_{\tilde{\theta}}\|\hat{g}_{(x_1,\ldots,x_N)})=0,
\end{equation}
then RKL is consistent on the problem.
\end{cor}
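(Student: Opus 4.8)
The plan is to read off from Theorem~\ref{T:elan} that RKL selects the $\theta'$ whose likelihood is Kullback-Leibler-closest to the reference distribution $\hat g_x$, and then to convert the hypothesis~\eqref{Eq:limit} into convergence of the \emph{estimate's} likelihood to the \emph{true} likelihood, from which distinctiveness delivers convergence of the parameter itself. First I would fix a target $\tilde\theta\in\Theta$ and restrict attention to the probability-$1$ set of sample paths $x=(x_1,x_2,\ldots)$ drawn from $f_{\tilde\theta}$ on which~\eqref{Eq:limit} holds; since everything below is deterministic on such a path, it suffices to prove the conclusion there.

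Next, set $\theta_N\defeq\hat\theta_{\text{RKL}}(x_{1:N})$, the RKL estimate for the truncated problem $P_N$, and let $f_{\theta'}$ denote the $N$-fold likelihood of $x_{1:N}$ given $\theta'$, so that all divergences at stage $N$ live on $X_1\times\cdots\times X_N$. By Theorem~\ref{T:elan}, $\theta_N$ minimises $D_{\text{KL}}(f_{\theta'}\|\hat g_{x_{1:N}})$ over $\theta'$, so in particular
\[
D_{\text{KL}}(f_{\theta_N}\|\hat g_{x_{1:N}})\le D_{\text{KL}}(f_{\tilde\theta}\|\hat g_{x_{1:N}})\xrightarrow[N\to\infty]{}0,
\]
the limit being exactly~\eqref{Eq:limit}. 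Hence both $D_{\text{KL}}(f_{\tilde\theta}\|\hat g_{x_{1:N}})\to 0$ and $D_{\text{KL}}(f_{\theta_N}\|\hat g_{x_{1:N}})\to 0$. Applying Pinsker's inequality to each --- legitimate since $f_{\theta'}$ and $\hat g_{x_{1:N}}$ are genuine probability densities --- gives $\|f_{\tilde\theta}-\hat g_{x_{1:N}}\|_{\text{TV}}\to 0$ and $\|f_{\theta_N}-\hat g_{x_{1:N}}\|_{\text{TV}}\to 0$, and the triangle inequality for total variation then yields $f_{\theta_N}\xrightarrow{\text{TV}}f_{\tilde\theta}$.

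It remains only to invoke the distinctiveness of the likelihoods with the sequence $\{\theta_N\}_{N\in\setN}$ and the point $\tilde\theta$: from $f_{\theta_N}\xrightarrow{\text{TV}}f_{\tilde\theta}$ we conclude $\theta_N\to\tilde\theta$, so for every neighbourhood $S$ of $\tilde\theta$ we have $\theta_N\in S$ for all sufficiently large $N$, i.e.\ $\lim_{N\to\infty}\hat\theta_{\text{RKL}}(x_{1:N})=\tilde\theta\in S$, which is exactly consistency. I expect no serious obstacle here: the conceptual content is entirely in hypothesis~\eqref{Eq:limit}, and the corollary is essentially the observation that Pinsker's inequality together with the triangle inequality reduce that hypothesis to the desired convergence of the estimates. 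The only points requiring care are the implicit well-definedness of the $\argmin$ defining $\theta_N$ for each $N$ (already presumed whenever one speaks of RKL as an estimation method), and the bookkeeping ensuring the divergences compared at each stage $N$ genuinely live on the same space $X_1\times\cdots\times X_N$.
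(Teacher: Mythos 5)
Your argument is correct and follows essentially the same route as the paper's own proof: use Theorem~\ref{T:elan} to see that the minimiser's divergence to $\hat g_{x_{1:N}}$ is bounded by the hypothesis \eqref{Eq:limit}, then apply Pinsker's inequality and the triangle inequality in total variation, and finish with distinctiveness. The extra care you take about the probability-one set of sample paths and the per-$N$ spaces is a harmless elaboration of what the paper leaves implicit.
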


\begin{proof}
By Theorem~\ref{T:elan},
\[
\hat{\theta}_{\text{RKL}}(x)=
\argmin_{\theta'\in\Theta} D_{\text{KL}}(f_{\theta'}\|\hat{g}_x).
\]
However, \eqref{Eq:limit} gives an upper bound on the minimum divergence, so
\[
\lim_{N\to\infty} \min_{\theta'\in\Theta} D_{\text{KL}}(f_{\theta'}\|\hat{g}_{(x_1,\ldots,x_N)})=0.
\]

By Pinsker's inequality \cite{cover2012elements}, $\hat{g}_{(x_1,\ldots,x_N)}$
converges under the total variations metric to both $f_{\tilde{\theta}}$ and
$f_{\hat{\theta}_{\text{RKL}}(x_1,\ldots,x_N)}$. By the triangle inequality,
the total variation distance between $f_{\tilde{\theta}}$ and
$f_{\hat{\theta}_{\text{RKL}}(x_1,\ldots,x_N)}$ tends to zero, and therefore
by assumption of likelihood distinctiveness
\[
\lim_{N\to\infty} \hat{\theta}_{\text{RKL}}(x_1,\ldots,x_N)=\tilde{\theta}.
\]
\end{proof}

RKL's consistency is therefore guaranteed in cases where $\hat{g}_x$
converges to $f_{\tilde{\theta}}$. This type of guarantee is similar to
guarantees that exist also for other estimators, such as minEKL and posterior
expectation, in that convergence of the estimator is reliant on the convergence
of a particular expectation function. Having a consistent posterior guarantees
that all $\theta$ values outside any neighbourhood of $\tilde{\theta}$
receive a posterior probability density tending to zero, but when calculating
expectations such probability densities are multiplied by the random
variable over which the expectation is calculated, for which reason if it
tends to infinity fast enough compared to the speed in which the probability
density tends to zero, the expectation may not converge.

RKL's distinction over posterior expectation and minEKL, however, is that,
as demonstrated in \eqref{Eq:ref}, the random variable of the
expectation is taken in log scale, making it much harder for its magnitude to
tend quickly to infinity.

This makes RKL's consistency more robust than minEKL's over a large class of
realistic estimation problems.

\section{Conclusions and future research}

We've introduced RKL as a novel, simple, general-purpose,
parameterisation-invariant Bayes estimation method, and showed it to be
consistent over a large class of estimation problems with consistent
posteriors and over Neyman-Scott one-way ANOVA problems regardless of one's
choice of prior.

Beyond being an interesting and useful new estimator in its own right
and a satisfactory
solution to the Neyman-Scott problem, the estimator also serves as a
direct refutation to Dowe's conjecture in \cite[p.~539]{Dowe2008a}.

The robustness of RKL's consistency was traced back to its reference
distribution being calculated as an expectation in log-scale.

This leaves open the question of whether there are other types of scaling
functions, with even better properties, that can be used instead of
log-scale, without losing the estimator's invariance to parameterisation.

\end{document}